\title{A Thorough Formalization of Conceptual Spaces\thanks{The final publication is available at Springer via \url{http://dx.doi.org/10.1007/978-3-319-67190-1_5}}}
\author{Lucas Bechberger\thanks{Corresponding author, ORCID: 0000-0002-1962-1777} \and Kai-Uwe K\"uhnberger}
\institute{Institute of Cognitive Science, Osnabr\"uck University, Osnabr\"uck, Germany \email{lucas.bechberger@uni-osnabrueck.de}, \email{kai-uwe.kuehnberger@uni-osnabrueck.de}}
\begin{document}
\maketitle

\begin{abstract}
The highly influential framework of conceptual spaces provides a geometric way of representing knowledge. Instances are represented by points in a high-dimensional space and concepts are represented by convex regions in this space. After pointing out a problem with the convexity requirement, we propose a formalization of conceptual spaces based on fuzzy star-shaped sets. Our formalization uses a parametric definition of concepts and extends the original framework by adding means to represent correlations between different domains in a geometric way. Moreover, we define computationally efficient operations on concepts (intersection, union, and projection onto a subspace) and show that these operations can support both learning and reasoning processes.
\end{abstract}
\begin{keywords} Conceptual Spaces \textperiodcentered\; Star-Shaped Sets \textperiodcentered\; Fuzzy Sets
\end{keywords}
\section{Introduction}
\label{Intro}

One common criticism of symbolic AI approaches is that the symbols they operate on do not contain any meaning: For the system, they are just arbitrary tokens that can be manipulated in some way. This lack of inherent meaning in abstract symbols is called the “symbol grounding problem” \cite{Harnad1990}. One approach towards solving this problem is to devise a grounding mechanism that connects  abstract symbols to the real world, i.e., to perception and action.\\

The framework of conceptual spaces \cite{Gardenfors2000,Gardenfors2014} attempts to bridge this gap between symbolic and subsymbolic AI by proposing an intermediate conceptual layer based on geometric representations.
A conceptual space is a high-dimensional space spanned by a number of quality dimensions that are based on perception and/or subsymbolic processing. Convex regions in this space correspond to concepts. Abstract symbols can thus be grounded in reality by linking them to regions in a conceptual space whose dimensions are based on perception.

The framework of conceptual spaces has been highly influential in the last 15 years within cognitive science and cognitive linguistics \cite{Douven2011,Fiorini2013,Warglien2012}. It has also sparked considerable research in various subfields of artificial intelligence, ranging from robotics and computer vision \cite{Chella2005,Chella2001,Chella2003} over the semantic web and ontology integration \cite{Adams2009a,Dietze2008} to plausible reasoning \cite{Derrac2015,Schockaert2011}.\\

One important question is however left unaddressed by these research efforts: How can an (artificial) agent learn about meaningful regions in a conceptual space purely from unlabeled perceptual data?

Our approach for solving this concept formation problem is to devise an incremental clustering algorithm that groups a stream of unlabeled observations (represented as points in a conceptual space) into meaningful regions.

In this paper, we point out that G\"ardenfors' convexity requirement prevents a geometric representation of correlations. We resolve this problem by using star-shaped instead of convex sets. Our mathematical formalization defines concepts in a parametric way that is easily implementable. We furthermore define computationally efficient operations on these concepts, which can support both machine learning and reasoning processes. This paper therefore lays the foundation for our work on concept formation.

The remainder of this paper is structured as follows:
Section \ref{CS} introduces the general framework of conceptual spaces and points out a problem with the notion of convexity. Section \ref{FSSSS} describes our formalization of concepts as fuzzy star-shaped sets. In Section \ref{Operations}, we define operations on these sets and in Section \ref{Applications} we show that they can support both machine learning and reasoning processes. Section \ref{RelatedWork} summarizes related work and Section \ref{Conclusion} concludes the paper. Proofs of our propositions are provided in an appendix avaliable online at \url{http://lucas-bechberger.de/appendix-ki-2017/}.

\section{Conceptual Spaces}
\label{CS}

\subsection{Definition of Conceptual Spaces}
\label{CS:Definition}

This section presents the cognitive framework of conceptual spaces as described in \cite{Gardenfors2000} and introduces our formalization of dimensions, domains, and distances.

A conceptual space is a high-dimensional space spanned by a set $D$ of so-called ``quality dimensions''. Each of these dimensions $d \in D$ represents a way in which two stimuli can be judged to be similar or different. Examples for quality dimensions include temperature, weight, time, pitch, and hue. We denote the distance between two points $x$ and $y$ with respect to a dimension $d$ as $|x_d - y_d|$.

A domain $\delta \subseteq D$ is a set of dimensions that inherently belong together. Different perceptual modalities (like color, shape, or taste) are represented by different domains. The color domain for instance consists of the three dimensions hue, saturation, and brightness.

G\"{a}rdenfors argues based on psychological evidence \cite{Attneave1950,Shepard1964} that distance within a domain $\delta$ should be measured by the weighted Euclidean metric: 
$$d_E^{\delta}(x,y, W_{\delta}) = \sqrt{\sum_{d \in \delta} w_{d} \cdot | x_{d} - y_{d} |^2}$$
The parameter $W_{\delta}$ contains positive weights $w_{d}$ for all dimensions $d \in \delta$ representing their relative importance. We assume that $\textstyle\sum_{d \in \delta} w_{d} = 1$.\\


The overall conceptual space $CS$ is defined as the product space of all dimensions. Again, based on psychological evidence \cite{Attneave1950,Shepard1964}, G\"{a}rdenfors argues that distance within the overall conceptual space should be measured by the weighted Manhattan metric $d_M$ of the intra-domain distances. Let $\Delta$ be the set of all domains in $CS$. We define the distance within a conceptual space as follows:
$$
d_C^{\Delta}(x,y,W) = \sum_{\delta \in \Delta} w_{\delta} \cdot d_E^{\delta}(x,y,W_{\delta})
= \sum_{\delta \in \Delta}w_{\delta} \cdot \sqrt{\sum_{d \in \delta} w_{d} \cdot |x_{d} - y_{d}|^2}
$$
The parameter $W\hspace{-0.2cm} = \hspace{-0.2cm}\langle W_{\Delta},\{W_{\delta}\}_{\delta \in \Delta}\rangle$ contains $W_{\Delta}$, the set of positive domain weights $w_{\delta}$. We require that $\textstyle\sum_{\delta \in \Delta} w_{\delta} = |\Delta|$. Moreover, $W$ contains for each domain $\delta \in \Delta$ a set $W_{\delta}$ of dimension weights as defined above. The weights in $W$ are not globally constant, but depend on the current context. One can easily show that $d_C^{\Delta}(x,y,W)$ with a given $W$ is a metric.\\

The similarity of two points in a conceptual space is inversely related to their distance. G\"{a}rdenfors expresses this as follows :
$$Sim(x,y) = e^{-c \cdot d(x,y)}\quad \text{with a constant}\; c >0 \; \text{and a given metric}\; d$$

Betweenness is a logical predicate $B(x,y,z)$ that is true if and only if $y$ is considered to be between $x$ and $z$. It can be defined based on a given metric $d$: 
$$B_d(x,y,z) :\iff d(x,y) + d(y,z) = d(x,z)$$

The betweenness relation based on $d_E$ results in the line segment connecting the points $x$ and $z$, whereas the betweenness relation based on $d_M$ results in an axis-parallel cuboid between the points $x$ and $z$.
We can define convexity and star-shapedness based on the notion of betweenness:

\begin{definition}
\label{def:Convexity}
(Convexity)\\
A set $C \subseteq CS$ is \emph{convex} under a metric $d \;:\iff$

\hspace{1cm}$\forall {x \in C, z \in C, y \in CS}: \left(B_d(x,y,z) \rightarrow y \in C\right)$
\end{definition}

\begin{definition}
\label{def:StarShapedSet}
(Star-shapedness)\\
A set $S \subseteq CS$ is \emph{star-shaped} under a metric $d$ with respect to a set $P \subseteq S \;:\iff$ 

\hspace{1cm}$\forall {p \in P, z \in S, y \in CS}: \left(B_d(p,y,z) \rightarrow y \in S\right)$
\end{definition}

G\"{a}rdenfors distinguishes properties like ``red'', ``round'', and ``sweet'' from full-fleshed concepts like ``apple'' or ``dog'' by observing that properties can be defined on individual domains (e.g., color, shape, taste), whereas full-fleshed concepts involve multiple domains.

\begin{definition}
\label{def:CriterionP}
(Property)\\
A \emph{natural property} is a convex region of a domain in a conceptual space.
\end{definition}

Full-fleshed concepts can be expressed as a combination of properties from different domains. These domains might have a different importance for the concept which is reflected by so-called ``salience weights''. Another important aspect of concepts are the correlations between the different domains \cite{Medin1988}, which are important for both learning \cite{Billman1996} and reasoning \cite[Ch 8]{Murphy2002}.

\begin{definition}
\label{def:CriterionC}
(Concept)\\
A \emph{natural concept} is represented as a set of convex regions in a number of domains together with an assignment of salience weights to the domains and information about how the regions in different domains are correlated.
\end{definition}

\subsection{An Argument Against Convexity}
\label{CS:ArgumentAgainstConvexity}

\begin{figure}[tp]
\centering
\includegraphics[width=0.75\columnwidth]{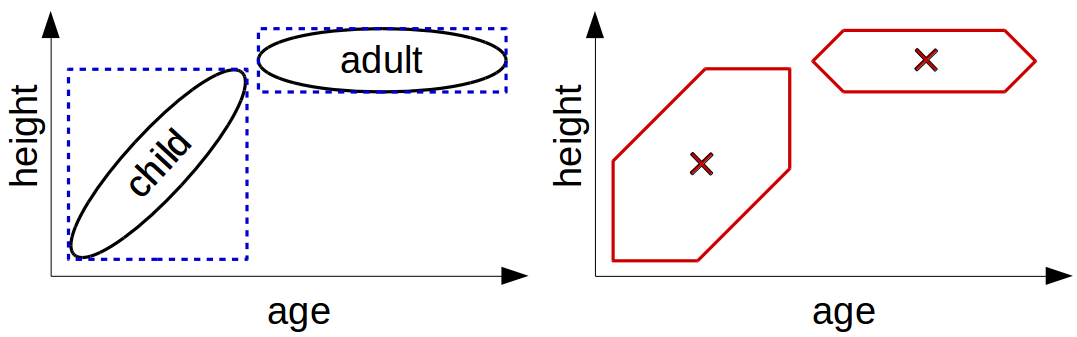}
\caption{Left: Intuitive way to define regions for the concepts of ``adult'' and ``child'' (solid) as well as representation by using convex sets (dashed). Right: Representation by using star-shaped sets with central points marked by crosses.}
\label{fig:ConvexityProblem}
\end{figure}

G\"{a}rdenfors \cite{Gardenfors2000} does not propose any concrete way for representing correlations between domains. As the main idea of the conceptual spaces framework is to find a geometric representation of conceptual structures, we think that a geometric representation of these correlations is desirable.

Consider the left part of Figure \ref{fig:ConvexityProblem}. In this example, we consider two domains, age and height, in order to define the concepts of child and adult. We would expect a strong correlation between age and height for children, but no such correlation for adults. This is represented by the two solid ellipses.

Domains are combined by using the Manhattan metric and convex sets under the Manhattan metric are axis-parallel cuboids. Thus, a convex representation of the two concepts results in the dashed rectangles. This means that we cannot geometrically represent correlations between domains if we assume that concepts are convex and that the Manhattan metric is used. We think that our example is not a pathological one and that similar problems will occur quite frequently when encoding concepts. From a different perspective, also Hern{\'{a}}ndez-Conde has recently argued against the convexity constraint in conceptual spaces \cite{Hernandez-Conde2016}.\\

If we only require star-shapedness instead of convexity, we can represent the correlation of age and height for children in a geometric way. This is shown in the right part of Figure \ref{fig:ConvexityProblem}: Both sketched sets are star-shaped under the Manhattan metric with respect to a central point. Although the star-shaped sets do not exactly correspond to our intuitive sketch in the left part of Figure \ref{fig:ConvexityProblem}, they definitely are an improvement over the convex representation.\footnote{The weaker requirement of star-shapedness allows us to ``cut out'' some corners from the rectangle. This enables us to geometrically represent correlations.}

Star-shaped sets cannot contain any ``holes''. They furthermore have a well defined central region $P$ that can be interpreted as a prototype. Thus, the connection that was established between the prototype theory of concepts and the framework of conceptual spaces \cite{Gardenfors2000} is preserved. Replacing convexity with star-shapedness is therefore only a minimal departure from the original framework.\\

The problem illustrated in Figure \ref{fig:ConvexityProblem} could also be resolved by using the Euclidean metric instead of the Manhattan metric for combining domains. We think however that this would be a major modification of the original framework. For instance, if the use of the Manhattan metric is abolished, the usage of domains to structure the conceptual space loses its main effect of influencing the overall distance metric. 
Moreover, psychological evidence \cite{Attneave1950,Shepard1964,Shepard1987} indicates that human similarity ratings are reflected better by the Manhattan metric than by the Euclidean metric if different domains are involved (e.g., stimuli differing in size and brightness). As a psychologically plausible representation of similarity is one of the core principles of the conceptual spaces framework, these findings should be taken into account.
Furthermore, in high-dimensional feature spaces the Manhattan metric provides a better relative contrast between close and distant points than the Euclidean metric \cite{Aggarwal2001}. If we expect the number of domains to be large, this also supports the usage of the Manhattan metric from an implementational point of view.

Based on these arguments, we think that weakening the convexity assumption is a better option than abolishing the use of the Manhattan metric.

\section{A Parametric Definition of Concepts}
\label{FSSSS}
\subsection{Preliminaries}
\label{FSSSS:Preliminaries}

Our formalization is based on the following insight: 

\begin{lemma}
\label{lemma:UnionOfConvex}
Let $C_1, ..., C_m$ be convex sets in $CS$ under some metric $d$ and let $P := \textstyle\bigcap_{i=1}^{m} C_i$. If $P \neq \emptyset$, then $S := \textstyle\bigcup_{i=1}^{m} C_i$ is star-shaped under $d$ w.r.t. $P$.
\end{lemma}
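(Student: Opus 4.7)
The plan is to unfold the definition of star-shapedness (Definition \ref{def:StarShapedSet}) and then use convexity of each $C_i$ (Definition \ref{def:Convexity}) on a carefully chosen index.

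First I would fix arbitrary $p \in P$, $z \in S$, and $y \in CS$ with $B_d(p, y, z)$; the goal is then to show $y \in S$. Since $z \in S = \bigcup_{i=1}^{m} C_i$, by definition of union there exists some index $i^{\ast} \in \{1, \ldots, m\}$ with $z \in C_{i^{\ast}}$. The key observation is that $p \in P = \bigcap_{i=1}^{m} C_i$ implies in particular $p \in C_{i^{\ast}}$, so both endpoints of the betweenness relation lie in the same convex set $C_{i^{\ast}}$.

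Now I would apply the convexity of $C_{i^{\ast}}$ (which holds under the metric $d$ by hypothesis) to the triple $(p, y, z)$: since $p \in C_{i^{\ast}}$, $z \in C_{i^{\ast}}$, and $B_d(p, y, z)$, Definition \ref{def:Convexity} yields $y \in C_{i^{\ast}}$. Therefore $y \in \bigcup_{i=1}^{m} C_i = S$, which is exactly what was to be shown.

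I do not expect a genuine obstacle here: the non-emptiness hypothesis $P \neq \emptyset$ is used only to guarantee that a witness $p$ exists (so that the universal statement in the definition of star-shapedness is not vacuously applied to a meaningless $P$), and the argument itself is a one-line consequence of pushing the convexity property through the union. The only thing worth double-checking is that Definition \ref{def:StarShapedSet} indeed requires $P \subseteq S$, which holds because $P = \bigcap C_i \subseteq \bigcup C_i = S$.
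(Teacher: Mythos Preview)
Your argument is correct and is precisely the ``obvious'' proof the paper alludes to (the paper's own proof is just the word \emph{Obvious} with a reference to \cite{Smith1968}). You have simply spelled out the details of unfolding Definitions~\ref{def:Convexity} and~\ref{def:StarShapedSet}, which is all that is needed here.
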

\begin{proof}
Obvious (see also \cite{Smith1968}).
\end{proof}

We will use axis-parallel cuboids as building blocks for our star-shaped sets. They are defined in the following way:

\begin{definition}
\label{def:Cuboid}
(Axis-parallel cuboid)\\
We describe an \emph{axis-parallel cuboid}\footnote{We will drop the modifier "axis-parallel" from now on.} $C$ as a triple $\langle\Delta_C, p^-, p^+\rangle$. $C$ is defined on the domains $\Delta_C \subseteq \Delta$, i.e. on the dimensions $D_C = \textstyle\bigcup_{\delta \in \Delta_C} \delta$. We call $p^-, p^+ $ the support points of $C$ and require:
$$
\forall {d \in D_C}: p^+_d,p^-_d \notin \{+\infty,-\infty\} \quad \land \quad
\forall {d \in D \setminus D_C}: p^-_d := -\infty \land p^+_d := +\infty
$$
Then, we define the cuboid $C$ in the following way:
$$C = \{x \in CS \;|\; \forall {d \in D}: p^-_d \leq x_d \leq p^+_d\}$$
\end{definition}

\begin{lemma}
\label{lemma:Cuboid}
A cuboid $C$ is convex under $d_C^{\Delta}$, given a fixed set of weights $W$.
\end{lemma}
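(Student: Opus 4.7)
The plan is to unpack the definition of betweenness under $d_C^\Delta$ and show that it forces $y$ to lie, within each domain, on the Euclidean line segment between $x$ and $z$. Once that is established, the axis-aligned definition of a cuboid immediately gives $y \in C$.

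First I would fix $x,z \in C$ and $y \in CS$ with $B_{d_C^\Delta}(x,y,z)$, i.e.\ $d_C^\Delta(x,y,W) + d_C^\Delta(y,z,W) = d_C^\Delta(x,z,W)$. Expanding the definition and pulling the domain-wise weights out of the sum, this equation becomes
$$\sum_{\delta \in \Delta} w_\delta \bigl(d_E^\delta(x,y,W_\delta) + d_E^\delta(y,z,W_\delta)\bigr) = \sum_{\delta \in \Delta} w_\delta \, d_E^\delta(x,z,W_\delta).$$
The ordinary triangle inequality applied to the weighted Euclidean metric on each domain gives $d_E^\delta(x,y,W_\delta) + d_E^\delta(y,z,W_\delta) \geq d_E^\delta(x,z,W_\delta)$ term by term. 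Since all $w_\delta > 0$, the overall equation can hold only if equality holds for every $\delta$, i.e.\ $B_{d_E^\delta}(x,y,z)$ for each $\delta \in \Delta$.

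Next I would use the fact that equality in the Euclidean triangle inequality characterizes collinear points on a segment: for each domain $\delta$ there exists $t_\delta \in [0,1]$ such that $y_d = (1-t_\delta)x_d + t_\delta z_d$ for every $d \in \delta$. (Formally this follows from the Cauchy--Schwarz equality case applied to the vectors $(x-y)$ and $(y-z)$ restricted to the coordinates of $\delta$, after rescaling by $\sqrt{w_d}$.) Thus within each domain, $y$ is a convex combination of $x$ and $z$.

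Finally, since $x,z \in C$, we have $p^-_d \leq x_d,z_d \leq p^+_d$ for every $d \in D_C$. For $d \in \delta \subseteq D_C$, the convex combination immediately yields $p^-_d \leq y_d \leq p^+_d$, while for $d \in D \setminus D_C$ the support points are $\pm\infty$ and the constraint is vacuous. Hence $y \in C$. The only step with any subtlety is the passage from equality in the weighted Euclidean triangle inequality to the explicit convex combination on each domain; everything else is bookkeeping.
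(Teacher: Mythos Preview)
Your proof is correct and is essentially the rigorous unpacking of the paper's one-line sketch: the paper simply asserts that cuboids are convex under $d_E$ and $d_M$ and that convexity under the combined metric $d_C^\Delta$ follows, whereas you actually carry out the argument by using positivity of the $w_\delta$ to force equality in each domain's Euclidean triangle inequality and then invoking the segment characterization of Euclidean betweenness. The only subtlety you flag---the equality case of the weighted Euclidean triangle inequality---is handled correctly via the $\sqrt{w_d}$ rescaling, so there is nothing to add.
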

\begin{proof}
It is easy to see that cuboids are convex with respect to $d_M$ and $d_E$. Based on this, one can show that they are also convex with respect to $d_C^{\Delta}$, which is a combination of $d_M$ and $d_E$.
\end{proof}

Our formalization will make use of fuzzy sets \cite{Zadeh1965}, which can be defined in our current context as follows:

\begin{definition}
(Fuzzy set)\\
A \emph{fuzzy set} $\widetilde{A}$ on $CS$ is defined by its membership function $\mu_{\widetilde{A}}: CS \rightarrow [0,1]$.
\end{definition}

Note that fuzzy sets contain crisp sets as a special case where $\mu_{\widetilde{A}}: CS \rightarrow \{0,1\}$. For each $x \in CS$, we interpret $\mu_{\widetilde{A}}(x)$ as degree of membership of $x$ in $\widetilde{A}$. 

\begin{definition}
(Alpha-cut)\\
Given a fuzzy set $\widetilde{A}$ on $CS$, its \emph{$\alpha$-cut} ${\widetilde{A}}^{\alpha}$ for $\alpha \in [0,1]$ is defined as follows:
$${\widetilde{A}}^{\alpha} = \{x \in CS\; |\; \mu_{\widetilde{A}}(x) \geq \alpha\}$$
\end{definition}

\begin{definition}
\label{def:FuzzyStarShaped}
(Fuzzy star-shapedness)\\
A fuzzy set $\widetilde{A}$ is called \emph{star-shaped} under a metric $d$ with respect to a (crisp) set $P$ if all of its $\alpha$-cuts ${\widetilde{A}}^{\alpha}$ are either empty or star-shaped under $d$ w.r.t. $P$.
\end{definition}

One can also generalize the ideas of subsethood, intersection, and union from crisp to fuzzy sets. We adopt the most widely used definitions:

\begin{definition}
\label{def:FuzzyOperations}
(Operations on fuzzy sets)\\
Let $\widetilde{A}, \widetilde{B}$ be two fuzzy sets defined on $CS$.
\begin{itemize}
	\item Subsethood: \hspace{0.20cm}$\widetilde{A} \subseteq \widetilde{B} :\iff (\forall {x \in CS}: \mu_{\widetilde{A}}(x) \leq \mu_{\widetilde{B}}(x))$
	\item Intersection: \hspace{0.10cm}$\forall x \in CS: \mu_{\widetilde{A} \cap \widetilde{B}}(x) := \min(\mu_{\widetilde{A}}(x),\mu_{\widetilde{B}}(x))$
	\item Union: \hspace{0.97cm}$\forall x \in CS: \mu_{\widetilde{A} \cup \widetilde{B}}(x) := \max(\mu_{\widetilde{A}}(x),\mu_{\widetilde{B}}(x))$
\end{itemize}
\end{definition}

\subsection{Fuzzy Simple Star-Shaped Sets}
\label{FSSSS:FSSSS}

\begin{figure}[tp]
\centering
\includegraphics[width = \columnwidth]{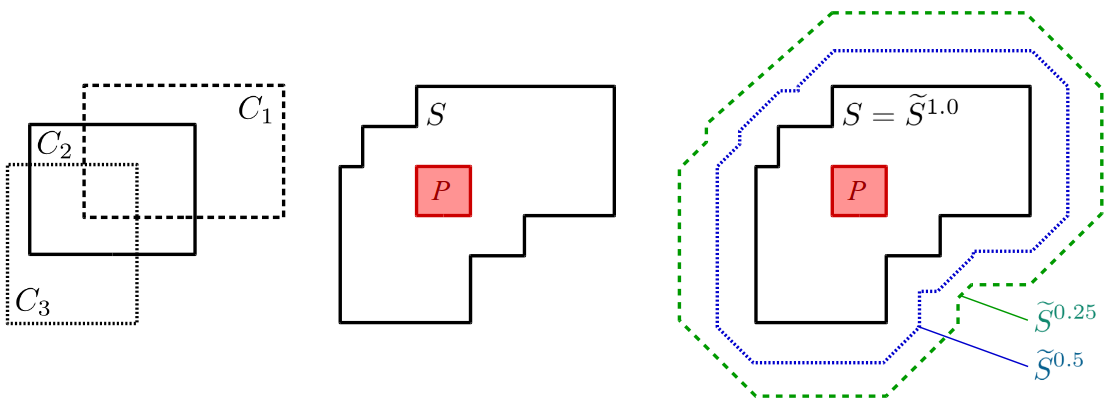} 
\caption{Left: Three cuboids $C_1, C_2, C_3$ with nonempty intersection. Middle: Resulting simple star-shaped set $S$ based on these cuboids. Right: Fuzzy simple star-shaped set $\tilde{S}$ based on $S$ with three $\alpha$-cuts for $\alpha \in \{1.0,0.5,0.25\}$.}
\label{fig:FSSSS}
\end{figure}

By combining Lemma \ref{lemma:UnionOfConvex} and Lemma \ref{lemma:Cuboid}, we see that any union of intersecting cuboids is star-shaped under $d_C^{\Delta}$. We use this insight to define simple star-shaped sets (illustrated in Figure \ref{fig:FSSSS}):

\begin{definition}
\label{def:SSSS}
(Simple star-shaped set)\\
We describe a \emph{simple star-shaped set} $S$ as a tuple $\langle\Delta_S,\{C_1,\dots,C_m\}\rangle$. $\Delta_S \subseteq \Delta$ is a set of domains on which the cuboids $\{C_1,\dots,C_m\}$ (and thus also $S$) are defined. We further require that the central region $P :=\textstyle\bigcap_{i = 1}^m C_i \neq \emptyset$. Then the simple star-shaped set $S$ is defined as 
$$S := \bigcup_{i=1}^m C_i$$
\end{definition}

In practice, it is often not possible to define clear-cut boundaries for concepts and properties. It is, for example, very hard to define a generally accepted crisp boundary for the property ``red''. We therefore use a fuzzified version of simple star-shaped sets for representing concepts, which allows us to define imprecise concept boundaries. This usage of fuzzy sets for representing concepts has already a long history (cf. \cite{Bvelohlavek2011,Douven2011,Osherson1982,Ruspini1991,Zadeh1982}).
We use a simple star-shaped set $S$ as a concept's ``core'' and define the membership of any point $x \in CS$ to this concept as $\max_{y \in S}Sim(x,y)$:
\begin{definition}
\label{def:FSSSS}
(Fuzzy simple star-shaped set)\\
A \emph{fuzzy simple star-shaped set} $\widetilde{S}$ is described by a quadruple $\langle S,\mu_0,c,W\rangle$ where
$S = \langle\Delta_S,\{C_1,\dots,C_m\}\rangle$ is a non-empty simple star-shaped set. The parameter $\mu_0 \in (0,1]$ controls the highest possible membership to $\widetilde{S}$ and is usually set to 1. The sensitivity parameter $c > 0$ controls the rate of the exponential decay in the similarity function. Finally, $W = \langle W_{\Delta_S},\{W_{\delta}\}_{\delta \in \Delta_S}\rangle$ contains positive weights for all domains in $\Delta_S$ and all dimensions within these domains, reflecting their respective importance. We require that $\textstyle\sum_{\delta \in \Delta_S} w_{\delta} = |\Delta_S|$ and that $\forall {\delta \in \Delta_S}:\textstyle\sum_{d \in \delta} w_{d} = 1$.\\
The membership function of $\widetilde{S}$ is then defined as follows:
$$\mu_{\widetilde{S}}(x) = \mu_0 \cdot \max_{y \in S}(e^{-c \cdot d_C^{\Delta_S}(x,y,W)})$$
\end{definition}

The sensitivity parameter $c$ controls the overall degree of fuzziness of $\widetilde{S}$ by determining how fast the membership drops to zero. The weights $W$ represent not only the relative importance of the respective domain or dimension for the represented concept, but they also influence the relative fuzziness with respect to this domain or dimension.
Note that if $|\Delta_S| = 1$, then $\widetilde{S}$ represents a property, and if $|\Delta_S| > 1$, then $\widetilde{S}$ represents a concept.

The right part of Figure \ref{fig:FSSSS} shows a fuzzy simple star-shaped set $\widetilde{S}$. In this illustration, the $x$ and $y$ axes are assumed to belong to different domains, and are combined with the Manhattan metric using equal weights.

\begin{proposition}
Any fuzzy simple star-shaped set $\widetilde{S} = \langle S,\mu_0,c,W\rangle$ is star-shaped with respect to $P = \textstyle\bigcap_{i=1}^{m} C_i$ under $d_C^{\Delta_S}$.
\end{proposition}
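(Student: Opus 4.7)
The plan is to work directly from Definition \ref{def:FuzzyStarShaped}: fix an arbitrary $\alpha \in [0,1]$ and check that the $\alpha$-cut $\widetilde{S}^{\alpha}$ is either empty or star-shaped under $d_C^{\Delta_S}$ with respect to $P$. Since the membership function is bounded above by $\mu_0$, the cut is empty whenever $\alpha > \mu_0$, and $\widetilde{S}^{0} = CS$ is trivially star-shaped, so the only interesting range is $\alpha \in (0, \mu_0]$. There, setting $r := -\ln(\alpha / \mu_0)/c$, membership of $x$ at level $\alpha$ is equivalent to the existence of some $x^{\star} \in S$ with $d_C^{\Delta_S}(x, x^{\star}, W) \leq r$; the maximum in the definition of $\mu_{\widetilde{S}}$ is attained because each cuboid is closed and bounded in the dimensions that actually contribute to $d_C^{\Delta_S}$.

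Given such $\alpha$, pick $p \in P$, $z \in \widetilde{S}^{\alpha}$, and a point $y$ with $B_{d_C^{\Delta_S}}(p, y, z)$; the task is to produce some $y^{\star} \in S$ satisfying $d_C^{\Delta_S}(y, y^{\star}, W) \leq r$. From $z \in \widetilde{S}^{\alpha}$ I obtain a witness $z^{\star} \in S$ with $d_C^{\Delta_S}(z, z^{\star}, W) \leq r$, and because $S = \bigcup_{i=1}^{m} C_i$ this witness lies in some cuboid $C_j$; since $p \in P \subseteq C_j$ as well, both $p$ and $z^{\star}$ sit inside the same cuboid. The natural candidate $y^{\star}$ is obtained by mimicking, with $z^{\star}$ substituted for $z$, the interpolation that places $y$ between $p$ and $z$. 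Lemma \ref{lemma:Cuboid} will then guarantee $y^{\star} \in C_j \subseteq S$, and a short triangle-style estimate will give $d_C^{\Delta_S}(y, y^{\star}, W) \leq d_C^{\Delta_S}(z, z^{\star}, W) \leq r$.

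The crux, and the main obstacle, is a structural claim about betweenness under $d_C^{\Delta_S}$: namely, $B_{d_C^{\Delta_S}}(p, y, z)$ decomposes domain by domain into ordinary Euclidean betweenness, in the sense that for every $\delta \in \Delta_S$ there is a scalar $t_{\delta} \in [0,1]$ with $y_d = (1 - t_{\delta}) p_d + t_{\delta} z_d$ for all $d \in \delta$, where the $t_{\delta}$ are allowed to differ between domains. This holds because the outer weighted Manhattan combination with strictly positive weights $w_{\delta}$ forces equality in every summand of the triangle inequality for $d_C^{\Delta_S}$, while the inner weighted Euclidean metric $d_E^{\delta}$ is strictly convex, so each per-domain triangle inequality collapses to equality only on the corresponding straight segment. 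Once this decomposition is available, defining $y^{\star}_d := (1 - t_{\delta}) p_d + t_{\delta} z^{\star}_d$ for each $d \in \delta \in \Delta_S$ (and arbitrarily on the remaining dimensions, which influence neither the distance nor the cuboid constraints) makes $y^{\star}$ domain-wise between $p$ and $z^{\star}$ and hence metrically between them under $d_C^{\Delta_S}$, so $y^{\star} \in C_j$ by Lemma \ref{lemma:Cuboid}. Moreover $y_d - y^{\star}_d = t_{\delta}(z_d - z^{\star}_d)$ within each $\delta$, so summand by summand in the definition of $d_C^{\Delta_S}$ the factor $t_{\delta} \leq 1$ yields $d_C^{\Delta_S}(y, y^{\star}, W) \leq d_C^{\Delta_S}(z, z^{\star}, W) \leq r$, completing the argument.
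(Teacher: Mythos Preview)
Your argument is correct. The paper defers its proof to an external online appendix not reproduced in the provided source, so a line-by-line comparison is not possible, but the strategy you follow---characterizing $\widetilde{S}^{\alpha}$ for $\alpha\in(0,\mu_0]$ as the set of points within $d_C^{\Delta_S}$-distance $r=-\ln(\alpha/\mu_0)/c$ of $S$, decomposing $B_{d_C^{\Delta_S}}(p,y,z)$ into per-domain Euclidean betweenness via positivity of the $w_\delta$ and strict convexity of the weighted $\ell^2$ norms, and then building the witness $y^{\star}$ by applying the same per-domain interpolation to $z^{\star}$ in place of $z$---is the natural one and goes through without gaps. The one degenerate case, $p_\delta=z_\delta$ for some domain $\delta$, causes no trouble: any choice of $t_\delta$ then forces $y_\delta=p_\delta$ and the identity $y_d-y^{\star}_d=t_\delta(z_d-z^{\star}_d)$ remains valid, so both $y^{\star}\in C_j$ (via Lemma~\ref{lemma:Cuboid}) and the distance estimate follow exactly as you describe.
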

\begin{proof}
See appendix (\url{http://lucas-bechberger.de/appendix-ki-2017/}).
\end{proof}

\section{Operations on Concepts}
\label{Operations}

In this section, we define some operations on concepts (i.e., fuzzy simple star-shaped sets). The set of all concepts is closed under each of these operations.

\subsection{Intersection}
\label{Operations:Intersection}

If we intersect two simple star-shaped sets $S_1, S_2$, we simply need to intersect their cuboids. As an intersection of two cuboids is again a cuboid, the result of intersecting two simple star-shaped sets can be described as a union of cuboids. It is simple star-shaped if these resulting cuboids have a nonempty intersection. This is only the case if the central regions $P_1$ and $P_2$ of $S_1$ and $S_2$ intersect.\footnote{Note that if $D_{S_1} \cap D_{S_2} = \emptyset$, then $P_1 \cap P_2 \neq \emptyset$.}

However, we would like our intersection to result in a simple star-shaped set even if $P_1 \cap P_2 = \emptyset$. Thus, when intersecting two star-shaped sets, we might need to apply some repair mechanism in order to restore star-shapedness.\\

We propose to extend the cuboids $C_i$ of the intersection in such a way that they meet in some ``midpoint'' $p^* \in CS$ (e.g., the arithmetic mean of their centers). We create extended versions $C_i^{*}$ of all $C_i$ by defining their support points like this: 
$$\forall {d \in D}: p_{id}^{-*} := \min(p_{id}^-, p^*_d), \quad p_{id}^{+*} := \max(p_{id}^+, p^*_d)$$

The intersection of the resulting $C^{*}_i$ contains at least $p^*$, so it is not empty. This means that $S' = \langle\Delta_{S_1} \cup \Delta_{S_2}, \{C_1^{*},\dots,C_{m^*}^{*}\}\rangle$ is again a simple star-shaped set. We denote this modified intersection (consisting of the actual intersection and the application of the repair mechanism) as $S' = I(S_1,S_2).$\\

We define the intersection of two fuzzy simple star-shaped sets as $\widetilde{S}' = I(\widetilde{S}_1,\widetilde{S}_2) := \langle S',\mu'_0,c',W'\rangle$ with:
\begin{itemize}
	\item $S' := I(\widetilde{S}_1^{\alpha'},{\widetilde{S}}_2^{\alpha'})$ (where $\alpha' = \max\{\alpha \in [0,1]: {\widetilde{S}}_1^{\alpha} \cap {\widetilde{S}}_2^{\alpha} \neq \emptyset\}$)
	\item $\mu'_0 := \alpha'$
	\item $c' := \min(c^{(1)},c^{(2)})$
	\item $W'$ with weights defined as follows (where $s,t \in [0,1]$)\footnote{In some cases, the normalization constraint of the resulting domain weights might be violated. We can enforce this constraint by manually normalizing them.}:
	\begin{align*}
&\forall {\delta \in \Delta_{S_1} \cap \Delta_{S_2}}: \Big((w'_{\delta} := s \cdot w^{(1)}_{\delta} + (1-s) \cdot w^{(2)}_{\delta})\\
& \hspace{3.0cm}\land \forall {d \in \delta}: (w'_{d} := t \cdot w^{(1)}_{d} + (1-t) \cdot w^{(2)}_{d})\Big)\\
&\forall {\delta \in \Delta_{S_1} \setminus \Delta_{S_2}}: \Big((w'_{\delta} := w^{(1)}_{\delta}) \land \forall {d \in \delta}: (w'_{d} := w^{(1)}_{d})\Big)\\[-2pt]
&\forall {\delta \in \Delta_{S_2} \setminus \Delta_{S_1}}: \Big((w'_{\delta} := w^{(2)}_{\delta}) \land \forall {d \in \delta}: (w'_{d} := w^{(2)}_{d})\Big)
\end{align*}
\end{itemize}

When taking the combination of two somewhat imprecise concepts, the result should not be more precise than any of the original concepts. As the sensitivity parameter is inversely related to fuzziness, we take the minimum. If a weight is defined for both original sets, we take a convex combination, and if it is only defined for one of them, we simply copy it.

Note that for $\alpha' < \max(\mu_0^{(1)},\mu_0^{(2)})$, the $\alpha$-cuts $\widetilde{S}_1^{\alpha'}$ and $\widetilde{S}_2^{\alpha'}$ are still guaranteed to be star-shaped, but not necessarily simple star-shaped. In order to be still well-defined, the modified crisp intersection $I$  will in this case first compute their ``ordinary'' intersection, then approximate this intersection with cuboids (e.g., by using bounding boxes) and finally apply the repair mechanism.

\subsection{Union}
\label{Operations:Union}

As each simple star-shaped set is defined as a union of cuboids, the union of two such sets can also be expressed as a union of cuboids. However, the resulting set is not necessarily star-shaped -- only if the central regions of the original simple star-shaped sets intersect. So after each union, we might again need to perform a repair mechanism in order to restore star-shapedness. We propose to use the same repair mechanism that is also used for intersections. We denote the modified union as $S' = U(S_1, S_2)$.\\

We define the union of two fuzzy simple star-shaped sets as $\widetilde{S}' = U(\widetilde{S}_1, \widetilde{S}_2) := \langle S',\mu'_0,c',W'\rangle$ with:
\begin{itemize}
	\item $S' := U(S_1,S_2)$
	\item $\mu'_0 := \max(\mu_0^{(1)},\mu_0^{(2)})$
	\item $c'$ and $W'$ as described in Section \ref{Operations:Intersection}
\end{itemize}

\begin{proposition}
Let $\widetilde{S}_1 = \langle S_1, \mu_0^{(1)}, c^{(1)}, W^{(1)}\rangle$ and $\widetilde{S}_2 = \langle S_2, \mu_0^{(2)}, c^{(2)}, W^{(2)}\rangle$ be two fuzzy simple star-shaped sets. If we assume that $\Delta_{S_1} = \Delta_{S_2}$ and $W^{(1)} = W^{(2)}$, then $\widetilde{S}_1 \cup \widetilde{S}_2 \subseteq U(\widetilde{S}_1, \widetilde{S}_2) = \widetilde{S}'$.
\end{proposition}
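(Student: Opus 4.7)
The plan is to prove the subsethood pointwise via Definition \ref{def:FuzzyOperations}: it suffices to show $\max(\mu_{\widetilde{S}_1}(x), \mu_{\widetilde{S}_2}(x)) \leq \mu_{\widetilde{S}'}(x)$ for every $x \in CS$. I would first unpack the hypotheses to ensure that the three fuzzy sets are equipped with the same distance function. Since $\Delta_{S_1} = \Delta_{S_2}$ (call it $\Delta_S$) and the repair mechanism in $U$ does not introduce any new domain, $\Delta_{S'} = \Delta_S$. Because $W^{(1)} = W^{(2)}$, the convex-combination rules from Section \ref{Operations:Intersection} collapse to $W' = W^{(1)} = W^{(2)}$, since any convex combination of equal quantities is that quantity (and no normalization fix-up is triggered). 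Therefore $\widetilde{S}_1$, $\widetilde{S}_2$, and $\widetilde{S}'$ all use the metric $d_C^{\Delta_S}(\cdot,\cdot,W')$.

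Next I would establish the crisp inclusion $S_1 \cup S_2 \subseteq S' = U(S_1, S_2)$. By construction, $S'$ is either $S_1 \cup S_2$ itself (if the central regions already intersect) or the union of repaired cuboids $C_i^{*}$ obtained by setting $p_{id}^{-*} := \min(p_{id}^-, p_d^*)$ and $p_{id}^{+*} := \max(p_{id}^+, p_d^*)$. In both cases $C_i \subseteq C_i^{*}$ for every original cuboid $C_i$, so $S_1 \cup S_2 \subseteq S'$.

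The core of the proof is then a chain of three monotonicities. For each $i \in \{1,2\}$ and any $x \in CS$:
\begin{itemize}
\item $\mu_0^{(i)} \leq \mu_0' = \max(\mu_0^{(1)},\mu_0^{(2)})$;
\item $c^{(i)} \geq c' = \min(c^{(1)},c^{(2)})$, and since $d_C^{\Delta_S}(x,y,W') \geq 0$, the exponential is monotone in $c$, giving $e^{-c^{(i)} \cdot d_C^{\Delta_S}(x,y,W')} \leq e^{-c' \cdot d_C^{\Delta_S}(x,y,W')}$;
\item $S_i \subseteq S'$, so the maximum of the same expression over $S_i$ is at most the maximum over $S'$.
\end{itemize}
Composing these three inequalities yields $\mu_{\widetilde{S}_i}(x) \leq \mu_{\widetilde{S}'}(x)$, and taking the maximum over $i \in \{1,2\}$ finishes the proof.

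The main obstacle is not really conceptual but bookkeeping: one must verify that the equal-weight assumption really does make the weight-combination rule produce $W' = W^{(1)}$ (so that all three sets share the same metric and we can compare memberships term-by-term), and one must confirm that the repair step in $U$ can only enlarge, never shrink, the constituent cuboids. Once those two points are in place, the remainder is immediate from the monotonicities of $\mu_0$, $c$, and the set over which the maximum is taken.
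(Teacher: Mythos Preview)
Your proposal is correct and follows the natural line of argument: verify that the hypotheses force $W' = W^{(1)} = W^{(2)}$ and $\Delta_{S'} = \Delta_S$ so that all three membership functions use the same metric, check that the repair mechanism only enlarges cuboids so $S_i \subseteq S'$, and then compose the three monotonicities in $\mu_0$, $c$, and the domain of the maximum. The paper defers its proof to an external appendix, but this pointwise monotonicity argument is essentially the only route available given how $U(\widetilde{S}_1,\widetilde{S}_2)$ is defined, so your approach coincides with what the paper intends.
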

\begin{proof}
See appendix (\url{http://lucas-bechberger.de/appendix-ki-2017/}).
\end{proof}

\subsection{Subspace Projection}
\label{Operations:Projection}

Projecting a cuboid onto a subspace results in a cuboid. As one can easily see, projecting a simple star-shaped set $S$ onto a subspace results in another simple star-shaped set. We denote the projection of $S$ onto domains $\Delta_{S'} \subseteq \Delta_S$ as $S' = P(S, \Delta_{S'})$.\\

We define the projection of a fuzzy simple star-shaped set $\widetilde{S}$ onto domains $\Delta_{S'} \subseteq \Delta_S$ as $\widetilde{S}' = P(\widetilde{S}, \Delta_{S'}) := \langle S', \mu'_0, c', W'\rangle$ with:
\begin{itemize}
	\item $S' := P(S,\Delta_{S'})$
	\item $\mu'_0 := \mu_0$
	\item $c' := c$
	\item $W' := \langle\{|\Delta_S'| \cdot \frac{w_{\delta}}{\sum_{\delta' \in \Delta_{S'}} w_{\delta'}}\}_{\delta \in \Delta_{S'}},\{W_{\delta}\}_{\delta \in \Delta_{S'}}\rangle$
\end{itemize}

Note that we only apply minimal changes to the parameters: $\mu_0$ and $c$ stay the same, only the domain weights are updated in order to not violate their normalization constraint.\\

Projecting a set onto two complementary subspaces and then intersecting these projections again in the original space yields a superset of the original set. This is intuitively clear for simple star-shaped sets and can also be shown for fuzzy simple star-shaped sets under one additional constraint:
\begin{proposition} 
Let $\widetilde{S} = \langle S, \mu_0, c, W\rangle$ be a fuzzy simple star-shaped set. Let $\widetilde{S}_1 = P(\widetilde{S}, \Delta_1)$ and $\widetilde{S}_2 = P(\widetilde{S}, \Delta_2)$ with $\Delta_1 \cup \Delta_2 = \Delta_S$ and $\Delta_1 \cap \Delta_2 = \emptyset$. Let $\widetilde{S}' = I(\widetilde{S}_1, \widetilde{S}_2)$ as described in Section \ref{Operations:Intersection}. If $\sum_{\delta \in \Delta_1} w_{\delta} = |\Delta_1|$ and $\sum_{\delta \in \Delta_2} w_{\delta} = |\Delta_2|$, then $\widetilde{S} \subseteq \widetilde{S}'$.
\end{proposition}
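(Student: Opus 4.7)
My approach is to show $\mu_{\widetilde{S}}(x) \leq \mu_{\widetilde{S}'}(x)$ pointwise by unpacking the construction of $\widetilde{S}'$ and observing that it leaves $\mu_0$, $c$, $W$, and $\Delta_S$ unchanged while only enlarging the crisp core. The engine behind everything is the additive decomposition $d_C^{\Delta_S}(x,y,W) = d_C^{\Delta_1}(x,y,W^{(1)}) + d_C^{\Delta_2}(x,y,W^{(2)})$, which is where the hypothesis $\sum_{\delta \in \Delta_i} w_\delta = |\Delta_i|$ is used: it guarantees that the projection formula for $W^{(i)}$ returns exactly the original weight $w_\delta$ for each $\delta \in \Delta_i$.

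First I would verify that projecting can only inflate membership: since $P(S,\Delta_i) \supseteq S$ as subsets of $CS$ (the projection drops the bounds on the complementary domains) and $d_C^{\Delta_i} \leq d_C^{\Delta_S}$, one obtains $\mu_{\widetilde{S}_i}(x) \geq \mu_{\widetilde{S}}(x)$, and hence $\widetilde{S}_i^{\alpha} \supseteq \widetilde{S}^{\alpha}$ for every $\alpha$. Specializing to $\alpha = \mu_0$ gives $\widetilde{S}_1^{\mu_0} \cap \widetilde{S}_2^{\mu_0} \supseteq \widetilde{S}^{\mu_0} = S \neq \emptyset$; since neither cut is nonempty above $\mu_0$, the maximum is attained exactly at $\mu_0$, i.e., $\alpha' = \mu_0$. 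Combined with $c^{(1)} = c^{(2)} = c$ and the weight analysis above, this forces $\mu_0' = \mu_0$, $c' = c$, $W' = W$, and $\Delta_{S'} = \Delta_S$.

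Next, I would inspect the resulting core $S'$. Because cuboids are separable across the partition, every pairwise intersection $P(C_i,\Delta_1) \cap P(C_j,\Delta_2)$ is a cuboid on $\Delta_S$ that takes its $\Delta_1$-bounds from $C_i$ and its $\Delta_2$-bounds from $C_j$; the diagonal terms $i = j$ recover the original cuboids $C_i$, so $S \subseteq S'$. The same separability shows that the central region of $S'$ contains the original central region $P$ and is therefore nonempty, so the repair mechanism inside $I$ is never triggered. The conclusion is then immediate: for every $x \in CS$, $\mu_{\widetilde{S}'}(x) = \mu_0 \cdot \max_{y \in S'} e^{-c \cdot d_C^{\Delta_S}(x,y,W)} \geq \mu_0 \cdot \max_{y \in S} e^{-c \cdot d_C^{\Delta_S}(x,y,W)} = \mu_{\widetilde{S}}(x)$.

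The main obstacle is bookkeeping rather than deep mathematics: the operator $I$ is defined through a cascade of steps ($\alpha'$, cuboid-by-cuboid intersections, possible bounding-box approximation, and an optional repair), and the bulk of the work lies in checking that the normalization hypothesis is precisely the ingredient that trivializes all of these steps here, leaving the clean description $\widetilde{S}' = \langle S', \mu_0, c, W\rangle$ with $S \subseteq S'$.
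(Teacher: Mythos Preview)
Your proposal is correct and follows what is essentially the only natural route given the definitions: verify that the hypothesis $\sum_{\delta\in\Delta_i} w_\delta = |\Delta_i|$ makes the projection leave the domain weights untouched, deduce $\alpha' = \mu_0$, $c'=c$, $W'=W$, and $\Delta_{S'}=\Delta_S$, observe that the cuboid-wise intersection of $P(S,\Delta_1)$ and $P(S,\Delta_2)$ contains the diagonal terms $C_i$ (hence $S\subseteq S'$) and has a nonempty common intersection containing $P$ (hence no repair), and conclude the pointwise inequality of membership functions. This is the same line of argument the paper's appendix pursues; your identification of the normalization hypothesis as precisely what trivializes the weight-renormalization step in $P$ and the merge step in $I$ is exactly the point.
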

\begin{proof}
See appendix (\url{http://lucas-bechberger.de/appendix-ki-2017/}).
\end{proof}

\section{Supported Applications}
\label{Applications}

\subsection{Machine Learning Process: Clustering}
\label{Operations:Clustering}

The operations described in Section \ref{Operations} can be used by a clustering algorithm in the following way:

The clustering algorithm can create and delete fuzzy simple star-shaped sets. It can move and resize an existing cluster as well as adjust its form by modifying the support points of the cuboids that define its core. One must however ensure that such modifications preserve the non-emptiness of the cuboids' intersection. Moreover, a cluster's form can be changed by modifying the parameters $c$ and $W$: By changing $c$, one can control the overall degree of fuzziness, and by changing $W$, one can control how this fuzziness is distributed among the different domains and dimensions.
Two neighboring clusters $\widetilde{S}_1, \widetilde{S}_2$ can be merged into a single cluster by unifying them. A single cluster can be split up into two parts by replacing it with two smaller clusters.

So clusters can be created, deleted, modified, merged, and split -- which is sufficient for defining a clustering algorithm.

\subsection{Reasoning Process: Concept Combination}
\label{Operations:ConceptCombination}

The operations defined in Section \ref{Operations} can also be used for combining concepts.

The modified intersection $I(\widetilde{S}_1,\widetilde{S}_2)$ roughly corresponds to a logical ``AND'': Intersecting ``green'' with ``blue'' results in the set of all colors that are both green and blue to at least some degree. The modified union $U(\widetilde{S}_1, \widetilde{S}_2)$ can be used to construct higher-level categories: For instance, the concept of ``fruit'' can be obtained by the unification of ``apple'', ``banana'', ``pear'', ``pineapple'', etc.

G\"{a}rdenfors \cite{Gardenfors2000} argues that adjective-noun combinations like ``green apple'' or ``purple banana'' can be expressed by combining properties with concepts. This is supported by our operations of intersection and subspace projection:

In combinations like ``green apple'', property and concept are compatible. We expect that their cores intersect and that the $\mu_0$ parameter of their intersection is therefore relatively large. In this case, ``green'' should narrow down the color information associated with the ``apple'' concept. This can be achieved by simply computing their intersection.

In combinations like ``purple banana'', property and concept are incompatible. We expect that their cores do not intersect and that the $\mu_0$ parameter of their intersection is relatively small. In this case, ``purple'' should replace the color information associated with the ``banana'' concept. This can be achieved by first removing the color domain from the ``banana'' concept (through a subspace projection) and by then intersecting this intermediate result with ``purple''.

As one can see from this short discussion, our formalized framework is also capable of supporting reasoning processes.

\section{Related Work}
\label{RelatedWork}

This work is of course not the first attempt to devise an implementable formalization of the conceptual spaces framework.

An early and very thorough formalization was done by Aisbett \& Gibbon \cite{Aisbett2001}. Like we, they consider concepts to be regions in the overall conceptual space. However, they stick with G\"{a}rdenfors' assumption of convexity and do not define concepts in a parametric way. Their formalization targets the interplay of symbols and geometric representations, but it is too abstract to be implementable. 

Rickard et al. \cite{Rickard2006,Rickard2007} provide a formalization based on fuzziness. They represent concepts as co-occurence matrices of properties. By using some mathematical transformations, they interpret these matrices as fuzzy sets on the universe of ordered property pairs. Their representation of correlations is not geometrical: They first discretize the domains (by defining properties) and then compute the co-occurences between these properties. Depending on the discretization, this might lead to a relatively coarse-grained notion of correlation. Moreover, as properties and concepts are represented in different ways, one has to use different learning and reasoning mechanisms. Their formalization is also not easy to work with due to the complex mathematical transformations involved.

Adams \& Raubal \cite{Adams2009} represent concepts by one convex polytope per domain. This allows for efficient computations while being potentially more expressive than our cuboid-based representation. The Manhattan metric is used to combine different domains. However, correlations between different domains are not taken into account as each convex polytope is only defined on a single domain. Adams \& Raubal also define operations on concepts, namely intersection, similarity computation, and concept combination. This makes their formalization quite similar in spirit to ours. One could generalize their approach by using polytopes that are defined on the overall space and that are convex under the Euclidean and star-shaped under the Manhattan metric. However, we have found that this requires additional constraints in order to ensure starshapedness. The number of these constraints grows exponentially with the number of dimensions. Each modification of a concept's description would then involve a large constraint satisfaction problem, rendering this representation unsuitable for learning processes. Our cuboid-based approach is more coarse-grained, but it only involves a single constraint, namely that the intersection of the cuboids is not empty. 

Lewis \& Lawry \cite{Lewis2016} have recently formalized conceptual spaces using random set theory. They define properties as random sets within single domains and concepts as random sets in a boolean space whose dimensions indicate the presence or absence of properties. Their approach is similar to ours in using a distance-based membership function to a set of prototypical points. However, their work focuses on modeling concept combinations and does not explicitly consider correlations between domains. 

Many practical applications of conceptual spaces (e.g., \cite{Chella2003,Derrac2015,Dietze2008,Raubal2004}) use only partial ad-hoc implementations of the conceptual spaces framework which usually ignore some important aspects of the framework (e.g., the domain structure).

Finally, we can relate our work to statistical relational learning (SRL): Our geometric representation of concepts is a complex data structure (in SRL one typically uses logics for this) that is augmented with soft computing in the form of fuzziness (similar to the usage of probability theory in SRL).
\section{Conclusion and Future Work}
\label{Conclusion}

In this paper, we proposed a new formalization of the conceptual spaces framework. We aimed to geometrically represent correlations between domains, which led us to consider the more general notion of star-shapedness instead of G\"{a}rden\-fors' favored constraint of convexity. We defined concepts as fuzzy sets based on intersecting cuboids and a similarity-based membership function. Moreover, we provided different computationally efficient operations and illustrated that these operations can support both learning and reasoning processes.\\

This work is mainly seen as a theoretical foundation for an actual implementation of the conceptual spaces theory. In future work, we will enrich this formalization with additional operations. Moreover, we will devise a clustering algorithm that will work with the proposed concept representation. Both the mathematical framework presented in this paper and the clustering algorithm will be implemented and tested in practice which will provide valuable feedback.

\bibliographystyle{plain}
\bibliography{/home/lbechberger/Documents/Papers/jabref.bib}

\end{document}